\newcommand{\eps}{\varepsilon}
\numberwithin{equation}{section}
\newtheorem{thm}{Theorem}
\newtheorem{lem}{Lemma}
\newcommand{\R}{\mathbb{R}}
\newcommand{\ba}{\mathbf{a}}
\newcommand{\mC}{\mathcal{C}}
\newcommand{\wt}{\widetilde}
\renewcommand{\ba}{\mathbf{a}}
\newcommand{\bw}{\mathbf{w}}
\newcommand{\bx}{\mathbf{x}}
\newcommand{\by}{\mathbf{y}}
\begin{document}

\title{The Kolmogorov-Arnold representation theorem revisited}

\author{Johannes Schmidt-Hieber\footnote{University of Twente and Leiden University \newline
{\small {\em Address:} Drienerlolaan 5, 7522 NB Enschede, The Netherlands} \newline {\small {\em Email:} \texttt{a.j.schmidt-hieber@utwente.nl}, \texttt{schmidthieberaj@math.leidenuniv.nl} \newline The research has been supported by the Dutch STAR network and a Vidi grant from the Dutch science organization (NWO). This work was done while the author was visiting the Simons Institute for the Theory of Computing. The constructive comments and suggestions shared by the associate editor and the three referees resulted in a significantly improved version of the article. The author wants to thank Matus Telgarsky for helpful remarks and pointing to the article \cite{MR1288945}.}}}

\date{}
\maketitle

\begin{abstract}
There is a longstanding debate whether the Kolmogorov-Arnold representation theorem can explain the use of more than one hidden layer in neural networks. The Kolmogorov-Arnold representation decomposes a multivariate function into an interior and an outer function and therefore has indeed a similar structure as a neural network with two hidden layers. But there are distinctive differences. One of the main obstacles is that the outer function depends on the represented function and can be wildly varying even if the represented function is smooth. We derive modifications of the Kolmogorov-Arnold representation that transfer smoothness properties of the represented function to the outer function and can be well approximated by ReLU networks. It appears that instead of two hidden layers, a more natural interpretation of the Kolmogorov-Arnold representation is that of a deep neural network where most of the layers are required to approximate the interior function. 
\end{abstract}

%
%
\paragraph{Keywords:} Kolmogorov-Arnold representation theorem; function approximation; deep ReLU networks; space-filling curves.

\newpage

\section{Introduction}

Why are additional hidden layers in a neural network helpful? The Kolmogorov-Arnold representation (KA representation in the following) seems to offer an answer to this question as it shows that every continuous function can be represented by a specific network with two hidden layers \cite{hechtnielsen1987}. But this interpretation has been highly disputed. Articles discussing the connection between both concepts have titles such as  "Representation properties of networks: Kolmogorov's theorem is irrelevant" \cite{girosipoggio1989} and "Kolmogorov's theorem is relevant" \cite{Kurkova1991}. 

The original version of the KA representation theorem states that for any continuous function $f: [0,1]^d \rightarrow \R,$ there exist univariate continuous functions $g_q,$ $\psi_{p,q}$ such that 
\begin{align}
	f(x_1, \ldots, x_d) = \sum_{q=0}^{2d} g_q\Big( \sum_{p=1}^d \psi_{p,q}(x_p)\Big).
	\label{eq.KA_rep1}
\end{align}
This means that the $(2d+1)(d+1)$ univariate functions $g_q$ and $\psi_{p,q}$ are enough for an exact representation of a $d$-variate function. Kolmogorov published the result in 1957 disproving the statement of Hilbert's 13th problem that is concerned with the solution of algebraic equations. The earliest proposals in the literature introducing multiple layers in neural networks date back to the sixties and the link between KA representation and multilayer neural networks occurred much later.

A ridge function is a function of the form $f(\bx)= \sum_{p=1}^m g_p(\bw_p^\top \bx ),$ with vectors $\bw_p \in \R^d$ and univariate functions $g_p$. The structure of the KA representation can therefore be viewed as the composition of two ridge functions. There exists no exact representation of continuous functions by ridge functions and matching upper and lower bounds for the best approximations are known \cite{MR1696577, MR1696573, MR1866380}. The composition structure is thus essential for the KA representation. A two-hidden-layer feedforward neural network with activation function $\sigma,$ hidden layers of width $m_1$ and $m_2,$ and one output unit can be written in the form
\begin{align*}
	f(\bx) =  \sum_{q=1}^{m_1} d_q \, \sigma\Big(\sum_{p=1}^{m_2} b_{pq}\sigma(\bw_p^\top \bx	+a_p)+c_q\Big), \  \text{with parameters} \ \bw_p \in \mathbb{R}^d, \ a_p, b_{pq}, c_q, d_q \in \mathbb{R}.
\end{align*}
Because of the similarity between the KA representation and neural networks, the argument above suggests that additional hidden layers can lead to unexpected features of neural network functions.

There are several reasons why the Kolmogorov-Arnold representation theorem has been initially declared as irrelevant for neural networks in \cite{girosipoggio1989}. The original proof of the KA representation in \cite{Kolmogorov1957} and some later versions are non-constructive providing very little insight on how the function representation works. Although the $\psi_{p,q}$ are continuous, they are still rough functions sharing similarities with the Cantor function. Meanwhile more refined KA representation theorems have been derived strengthening the connection to neural networks \cite{MR210852, SPRECHER1996765,SPRECHER1997447,MR2558696}. \cite{MAIOROV199981} showed that the KA representation can essentially be rewritten in the form of a two-hidden-layer neural network for a non-computable activation function $\sigma,$ see the literature review in Section \ref{sec.rel_lit} for more details. The following KA representation is much more explicit and practical.

\begin{thm}[Theorem 2.14 in \cite{braunthesis2009}]
Fix $d \geq 2.$ There are real numbers $a, b_p, c_q$ and a continuous and monotone function $\psi: \R \rightarrow \R,$ such that for any continuous function $f:[0,1]^d \rightarrow \R,$ there exists a continuous function $g: \R \rightarrow \R$ with
\begin{align*}
	f(x_1, \ldots, x_d) = \sum_{q=0}^{2d} g\Big(\sum_{p=1}^d b_p \psi(x_p + qa) + c_q\Big). 
\end{align*}
\end{thm}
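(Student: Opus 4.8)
The plan is to follow the functional-analytic superposition argument going back to Kahane and refined by Lorentz and Sprecher, splitting the proof into two nearly independent parts: first constructing the universal inner function $\psi$ together with the constants $a,b_p,c_q$ (all depending on $d$ only), and then, for a given $f$, constructing the outer function $g$ by successive approximation. Throughout I write $\xi_q(\bx)=\sum_{p=1}^d b_p\psi(x_p+qa)+c_q$ for $q=0,\dots,2d$, so that the claimed identity reads $f=\sum_{q=0}^{2d}g\circ\xi_q$.

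First I would construct $\psi$ so as to guarantee a \emph{separation property}. At each resolution level one partitions $[0,1]$ into a grid, inducing a decomposition of $[0,1]^d$ into small subcubes. The goal is that the vector map $\bx\mapsto(\xi_0(\bx),\dots,\xi_{2d}(\bx))$ separates these subcubes in the Kolmogorov sense: for every subcube $Q$, at least $d+1$ of the $2d+1$ indices $q$ are \emph{good}, meaning the range $\xi_q(Q)$ lies in an interval disjoint from $\xi_q(Q')$ for all other subcubes $Q'$. The reason $d+1$ out of $2d+1$ is attainable is a pigeonhole count: the grid spacing and the shift $a$ are chosen so that each coordinate $x_p$ falls near a grid boundary for at most one of the $2d+1$ shifts; across the $d$ coordinates at most $d$ shifts are spoiled, leaving at least $d+1$ good shifts for every point. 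Concretely I would take $\psi=\sum_k\lambda^{-k}\psi_k$ with rapidly decaying weights, each increment $\psi_k$ repairing collisions at scale $k$ while only slightly perturbing the coarser structure, and keep every increment monotone so that $\psi$ is continuous and monotone.

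With $\psi$ and the separation fixed, I would recover $g$ by successive approximation. Choosing the $c_q$ so that the ranges $\xi_q([0,1]^d)$ occupy pairwise disjoint intervals of $\R$ lets one prescribe a candidate $g$ on each block independently. Define $T:C(\R)\to C([0,1]^d)$ by $(Tg)(\bx)=\sum_{q=0}^{2d}g(\xi_q(\bx))$. The key is a one-step estimate: for a fixed fine resolution on which a given $h$ oscillates by at most $\eta\|h\|_\infty$ per subcube, set $g$ equal to $\tfrac{1}{2d+1}h(x_Q)$ on the isolated interval $\xi_q(Q)$ for each subcube $Q$ whose index $q$ is good (with $x_Q$ the center of $Q$), interpolating continuously elsewhere with block sup-norm $\le\tfrac{1}{2d+1}\|h\|_\infty$. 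For $\bx\in Q$ the $\ge d+1$ good terms contribute $\tfrac{1}{2d+1}h(x_Q)$ each and the $\le d$ bad terms at most $\tfrac{1}{2d+1}\|h\|_\infty$ apiece, so a short computation gives $\|h-Tg\|_\infty\le\theta\|h\|_\infty$ with $\theta=\tfrac{2d}{2d+1}+\eta<1$ and $\|g\|_\infty\le C\|h\|_\infty$. Iterating with $f_0=f$, $g_k$ chosen for $h=f_k$, and $f_{k+1}=f_k-Tg_k$ yields $\|f_k\|_\infty\le\theta^k\|f\|_\infty\to0$, a uniformly convergent $g=\sum_k g_k\in C(\R)$, and
\begin{align*}
	Tg=\sum_{k\ge0}Tg_k=\sum_{k\ge0}(f_k-f_{k+1})=f_0=f,
\end{align*}
the claimed identity.

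The hard part will be the first step: building a \emph{single} monotone continuous $\psi$, independent of $f$, that enforces the separation simultaneously at all resolution levels while keeping the ranges $\xi_q([0,1]^d)$ bounded. The delicate point is the multiscale bookkeeping, ensuring that the increments $\psi_k$ repairing collisions at fine scales neither destroy the separation already secured at coarser scales nor break monotonicity; once such a $\psi$ is in hand, the operator-theoretic construction of $g$ is comparatively routine.
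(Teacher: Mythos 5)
The paper does not prove this statement: it is imported verbatim from Braun's thesis, so there is no in-paper argument to compare yours against line by line. Your outline is the standard Kahane--Lorentz--Sprecher strategy, which is also the strategy of the cited source: a universal inner map enforcing a multiscale cube-separation property, followed by an operator-theoretic successive approximation $f_{k+1}=f_k-Tg_k$ with a one-step contraction. The quantitative part of your sketch is correct: with at least $d+1$ good indices per subcube, block values $\tfrac{1}{2d+1}h(x_Q)$, and oscillation $\eta\|h\|_\infty$ per subcube, one gets $\|h-Tg\|_\infty\le(\tfrac{2d}{2d+1}+\eta)\|h\|_\infty$, which is a contraction provided the resolution at step $k$ is chosen fine enough that $\eta<\tfrac{1}{2d+1}$; it is worth stating explicitly that this resolution depends on the modulus of continuity of $f_k$ and hence varies with $k$, which is exactly why the separation must hold at all scales simultaneously. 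The use of the $c_q$ to disjointify the ranges so that a single outer function $g$ suffices is also the right device.

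The genuine gap is the one you flag yourself, and it is not a small one: essentially all of the difficulty of the theorem sits in constructing a single continuous monotone $\psi$, together with fixed $b_p$ and $a$, for which the separation property holds at every scale of an infinite sequence of refinements. Two points in particular need to be made precise. First, separation is a property of the sums $\sum_{p=1}^d b_p\psi(x_p+qa)+c_q$, not of $\psi$ alone; the $b_p$ must be chosen (rationally independent, or decaying geometrically in a base tied to the grid) so that distinct subcubes cannot produce colliding sums, and the step ``repairing collisions at scale $k$'' must be shown not to reintroduce collisions at coarser scales while keeping every increment monotone and the total perturbation summable. Second, the pigeonhole count (each coordinate near a grid boundary for at most one of the $2d+1$ shifts) requires the gap widths and the shift $a$ to be coordinated across all scales at once. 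Until these constructions are carried out, what you have is a correct scaffold with the routine contraction argument filled in, but the load-bearing half of the proof is still a promissory note.
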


This representation is based on translations of one inner function $\psi$ and one outer function $g.$ The inner function $\psi$ is independent of $f.$ The dependence on $q$ in the first layer comes through the shifts $qa.$ The right hand side can be realized by a neural network with two hidden layers. The first hidden layer has $d$ units and activation function $\psi$ and the second hidden layer consists of $2d+1$ units with activation function $g.$

For a given $0<\beta \leq 1,$ we will assume that the represented function $f$ is $\beta$-smooth, which here means that there exists a constant $C,$ such that $|f(\bx)-f(\by)|\leq C\|\bx-\by\|_\infty^\beta$ for all $\bx,\by \in [0,1]^d.$ Let $m>0$ be arbitrary. To approximate a $\beta$-smooth function up to an error $m^{-\beta}$, it is well-known that standard approximation schemes need at least of the order of $m^d$ parameters. This means that any efficient neural network construction mimicking the KA representation and approximating $\beta$-smooth functions up to error $m^{-\beta}$ should have at most of the order of $m^d$ many network parameters. 

Starting from the KA representation, the objective of the article is to derive a deep ReLU network construction that is optimal in terms of number of parameters. For that reason, we first present novel versions of the KA representation that are easy to prove and also allow to transfer smoothness from the multivariate function to the outer function. In Section \ref{sec.deepReLU} the link is made to deep ReLU networks. 

The efficiency of the approximating neural network is also the main difference to the related work \cite{KURKOVA1992501, MONTANELLI20201}. Based on sigmoidal activation functions, the proof of Theorem 2 in \cite{KURKOVA1992501} proposes a neural network construction based on the KA representation with two hidden layers and $dm(m+1)$ and $m^2(m+1)^d$ hidden units to achieve approximation error of the order of $m^{-\beta}.$ This means that more than $m^{4+d}$ network weights are necessary, which is sub-optimal in view of the argument above. The very recent work \cite{MONTANELLI20201} uses a modern version of the KA representation that guarantees some smoothness of the interior function. Combined with the general result on function approximation by deep ReLU networks in \cite{Yarotsky2018}, a rate is derived that depends on the smoothness of the outer function via the function class $K_C([0,1]^d;\R)$, see p.4 in \cite{MONTANELLI20201} for a definition. The non-trivial dependence of the outer function on the represented function $f$ makes it difficult to derive explicit expressions for the approximation rate if $f$ is $\beta$-smooth.  Moreover, as the KA representation only guarantees low regularity of the interior function, it remains unclear whether optimal approximation rates can be obtained. 

Although the deep ReLU network proposed in \cite{Shen_2020} is not motivated by the KA representation or space-filling curves, the network construction is quite similar. Section \ref{sec.rel_lit} contains a more detailed comparison with this and other related approaches.

\section{New versions of the KA representation}

The starting point of our work is the apparent connection between the KA representation and space-filling curves. A space-filling curve $\gamma$ is a surjective map $[0,1] \rightarrow [0,1]^d.$ This means that it hits every point in $[0,1]^d$ and thus "fills" the cube $[0,1]^d.$ Known constructions are based on iterative procedures producing fractal-type shapes. If $\gamma^{-1}$ exists, we could then rewrite any function $f:[0,1]^d\rightarrow \R$ in the form 
\begin{align}
	f= \underbrace{(f\circ \gamma)}_{=: g} \circ \gamma^{-1}.
	\label{eq.main_decompo}
\end{align}
This would decompose the function $f$ into a function $\gamma^{-1} :\R^d \rightarrow [0,1]$ that can be chosen to be independent of $f$ and a univariate function $g=f\circ \gamma : [0,1] \rightarrow \R$ containing all the information of the $d$-variate function $f.$ Compared to the KA representation, there are two differences. Firstly, the interior function $\gamma^{-1}$ is $d$-variate and not univariate. Secondly, by Netto's theorem \cite{KUPERS}, a continuous surjective map $[0,1]\rightarrow [0,1]^2$ cannot be injective and $\gamma^{-1}$ does not exist. The argument above can therefore not be made precise for arbitrary dimension $d$ and a continuous space-filling curve $\gamma.$


To illustrate our approach, we first derive a simple KA representation based on \eqref{eq.main_decompo} and with $\gamma^{-1}$ an additive function. The identity avoids the continuity of the functions $\psi$ and $g,$ which is the major technical obstacle in the proof of the KA representation. The proof does moreover not require that the represented function $f$ is continuous.

\begin{lem}
\label{lem.KA_lem}
Fix integers $d,B\geq 2.$ There exists a monotone function $\psi: [0,1] \rightarrow \R$ such that for any function $f:[0,1]^d \rightarrow \R,$ we can find a function $g: \R\rightarrow \R$ with 
\begin{align}
	f(x_1, \ldots, x_d) = g \Big( \sum_{p=1}^d B^{-p} \psi(x_p) \Big).
	\label{eq.Z_curve_represent}
\end{align}
\end{lem}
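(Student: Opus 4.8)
The plan is to construct $\psi$ so that the univariate aggregation map
$\Phi(x_1,\dots,x_d):=\sum_{p=1}^d B^{-p}\psi(x_p)$ is injective on $[0,1]^d$. Once this is established the conclusion is immediate: I would define $g$ on the range of $\Phi$ by $g(\Phi(\bx)):=f(\bx)$, which is unambiguous precisely because $\Phi$ is injective, and extend $g$ arbitrarily (say by $0$) to the rest of $\R$. This realizes the informal decomposition \eqref{eq.main_decompo}, with the additive map $\gamma^{-1}(\bx)=\sum_p B^{-p}\psi(x_p)$ playing the role of the inverse of a $Z$-order (Morton) space-filling curve.

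The construction of $\psi$ is by base-$B$ digit interleaving. For every $x\in[0,1]$ I would fix a base-$B$ expansion $x=\sum_{k\ge 1}a_k(x)B^{-k}$ with digits $a_k(x)\in\{0,\dots,B-1\}$, adopting the convention that excludes expansions ending in an infinite run of the digit $B-1$, and set
\begin{align*}
 \psi(x):=\sum_{k\ge 1}a_k(x)\,B^{-(k-1)d}.
\end{align*}
The decisive bookkeeping observation is that
\begin{align*}
 \Phi(\bx)=\sum_{p=1}^d B^{-p}\psi(x_p)=\sum_{p=1}^d\sum_{k\ge 1}a_k(x_p)\,B^{-(p+(k-1)d)},
\end{align*}
and that as $(p,k)$ ranges over $\{1,\dots,d\}\times\{1,2,\dots\}$ the exponent $p+(k-1)d$ runs bijectively through all positive integers. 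Hence $\Phi(\bx)$ is exactly the number whose $j$-th base-$B$ digit, for $j=p+(k-1)d$, equals $a_k(x_p)$: the digit blocks of the $d$ coordinates are interleaved into a single expansion.

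For injectivity I would argue that, since each coordinate expansion was chosen not to terminate in all $B-1$'s, the interleaved expansion of $\Phi(\bx)$ also cannot terminate in all $B-1$'s, so it is the \emph{canonical} expansion of the real number $\Phi(\bx)$; decoding this canonical expansion then recovers every digit $a_k(x_p)$ and hence each $x_p$. For monotonicity of $\psi$ I would compare two expansions at the first position $j$ where they differ: the leading contribution $B^{-(j-1)d}$ dominates the entire tail, because
\begin{align*}
 (B-1)\sum_{k>j}B^{-(k-1)d}=(B-1)\frac{B^{-jd}}{1-B^{-d}}<B^{-(j-1)d}
\end{align*}
is equivalent to $B<B^d$, which holds for all $B,d\ge 2$. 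Thus a larger leading differing digit forces a strictly larger value of $\psi$, so $\psi$ is non-decreasing.

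The only genuinely delicate point — and the step I would treat with the most care — is the non-uniqueness of base-$B$ expansions. Well-definedness of $\psi$, injectivity of $\Phi$, and compatibility of the digitwise comparison with the usual order on $\R$ all hinge on fixing one expansion per point and verifying that the excluded all-$(B-1)$ tails never reappear after interleaving. The right endpoint $x=1$, whose only admissible expansion is the all-$(B-1)$ one, must be handled as a separate case; since it occurs as a single value per coordinate and the interleaved expansion stays canonical whenever at least one coordinate differs from $1$, it does not affect injectivity. Once the expansion convention is pinned down, the three claims reduce to routine digit computations and the lemma follows.
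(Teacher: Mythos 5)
Your proposal is correct and follows essentially the same route as the paper: define $\psi$ by spreading the base-$B$ digits of $x$ with $d-1$ intervening zeros so that $\Phi(\bx)=\sum_p B^{-p}\psi(x_p)$ interleaves the coordinate digits, observe that $\Phi$ is injective, and set $g=f\circ\Phi^{-1}$. You are in fact somewhat more careful than the paper, which merely ``selects one $B$-adic representation'' per point; your explicit handling of the excluded all-$(B-1)$ tails, the canonicity of the interleaved expansion, and the tail-domination bound for monotonicity fills in details the paper leaves implicit.
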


\begin{proof}
The B-adic representation of a number is not unique. For the decimal representation, $1$ is for instance the same as $1=0.999\ldots$ To avoid any problems that this may cause, we select for each real number $x\in [0,1]$ one B-adic representation $x=\sum_{j\geq 1} B^{-j}a_j^x$ with $a_j^x \in \{0, \ldots, B-1\}.$ Throughout the following, it is often convenient to rewrite $x$ in its B-adic expansion. Set
\begin{align*}
	x= \sum_{j=1}^{\infty}\frac{a_j^x}{B^j}=:[0. a_1^x a_2^x a_3^x\ldots]_B
\end{align*}
and define the function
\begin{align*}
	\psi(x)= \sum_{j=1}^ \infty \frac{a_j^x}{B^{d(j-1)}}.	
\end{align*}
The function $\psi$ is monotone and maps $x$ to a number with $B$-adic representation $$[a_1^x.\underbrace{0\ldots\ldots0}_{(d-1)\text{-times}}a_2^x\underbrace{0\ldots\ldots0}_{(d-1)\text{-times}}a_3^x0\ldots\ldots]_B$$ inserting always $d-1$ zeros between the original $B$-adic digits of $x.$ Multiplication by $B^{-p}$ shifts moreover the digits by $p$ places to the right. From that we obtain the $B$-adic representation
\begin{align}
	\Psi(x_1, \ldots,x_d):=\sum_{p=1}^d B^{-p} \psi(x_p) = \big[0.a_1^{x_1}a_1^{x_2}\ldots a_1^{x_d} a_2^{x_1} \ldots\big]_B
	\label{eq.KA_funct}
\end{align}
Because we can recover $x_1,\ldots, x_d$ from $\Psi(x_1, \ldots,x_d),$ the map $\Psi$ is invertible. Denote the inverse by $\Psi^{-1}.$ We can now define $g=f \circ \Psi^{-1}$ and this proves the result.
\end{proof}

The proof provides some insights regarding the structure of the KA representation. Although one might find the construction of $\Psi:[0,1]^d\rightarrow [0,1]$ in the proof very artificial, a substantial amount of neighborhood information persists under $\Psi.$ Indeed, points that are close are often mapped to nearby values. If for instance $\bx_1, \bx_2\in [0,1]^d$ are two points coinciding in all components up to the $k$-th $B$-adic digit, then, $\Psi(\bx_1)$ and $\Psi(\bx_2)$ coincide up to the $kd$-th $B$-adic digit. In this sense, the KA representation can be viewed as a two step procedure, where the first step $\Psi$ does some extreme dimension reduction. Compared to low-dimensional random embeddings which by the  Johnson-Lindenstrauss lemma nearly preserve the Euclidean distances among points, there seems, however, to be no good general characterization of how the interior function changes distances. 

The function $\Psi$ is discontinuous at all points with finite $B$-adic representation.  The map $\Psi$ defines moreover an order relation on $[0,1]^d$ via $\bx <\by :\Leftrightarrow \Psi(\bx) < \Psi(\by).$ For $B=d=2,$ the inverse map $\Psi^{-1}$ is often called the Morton order and coincides, up to a rotation of $90$ degrees, with the $z$-curve in the theory of space-filling curves (\cite{Bader2013}, Section 7.2). 

If $f$ is a piecewise constant function on a dyadic grid, the outer function $g$ is also piecewise constant. As a negative result, we show that for this representation, smoothness of $f$ does not translate into smoothness on $g.$

\begin{lem}
Let $k$ be a positive integer. Consider representation \eqref{eq.Z_curve_represent} for $B=2$ and let $g$ be as in the proof of Lemma \ref{lem.KA_lem}.
\begin{compactitem}
\item[(i)] If $f:\R^d \rightarrow \R$ is piecewise constant on the $2^{kd}$ hypercubes $\times_{j=1}^d (\ell_j 2^{-k}, (\ell_j +1)2^{-k}),$ with $\ell_1, \ldots, \ell_d \in \{0,2^k-1\},$ then $g$ is a piecewise constant function on the intervals $(\ell 2^{-kd}, (\ell+1)2^{-kd}),$ $\ell=0,\ldots,2^{kd}-1.$ 
\item[(ii)] If $f(x)=x,$ then $g$ is discontinuous. 
\end{compactitem}
\end{lem}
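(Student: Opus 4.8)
The plan is to read off both statements directly from the digit bookkeeping in the proof of Lemma~\ref{lem.KA_lem}. Recall that there $g = f\circ\Psi^{-1}$, where $\Psi$ interleaves the $B$-adic digits of the coordinates: if $\bx=(x_1,\dots,x_d)$ with $x_p = [0.a_1^{x_p}a_2^{x_p}\dots]_B$, then $\Psi(\bx)=[0.a_1^{x_1}\dots a_1^{x_d}a_2^{x_1}\dots]_B$. Consequently $g(t)$ is obtained by de-interleaving the $B$-adic digits of $t$ into $d$ coordinates and then applying $f$. With $B=2$ throughout, everything reduces to tracking binary digits, and the two claims are two consequences of the same correspondence between ``a block of $d$ consecutive interleaved digits of $t$'' and ``one digit of each coordinate''.

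For part (i) I would argue as follows. Saying that $f$ is constant on each open hypercube $\times_{j=1}^d(\ell_j 2^{-k},(\ell_j+1)2^{-k})$ is the same as saying that, away from the (dyadic) cube faces, $f(\bx)$ depends only on the first $k$ binary digits of each coordinate $x_p$. Under the interleaving these $kd$ digits are exactly the first $kd$ digits of $t=\Psi(\bx)$, since the digit at position $i$ of coordinate $p$ sits at position $p+(i-1)d\le kd$ of $t$. I would then fix $\ell\in\{0,\dots,2^{kd}-1\}$ and take $t$ in the open interval $(\ell 2^{-kd},(\ell+1)2^{-kd})$. Such a $t$ is not a multiple of $2^{-kd}$, so $\lfloor 2^{kd}t\rfloor=\ell$ and the first $kd$ binary digits of $t$ equal the binary expansion of $\ell$, irrespective of which representation was selected. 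De-interleaving, the first $k$ digits of every coordinate of $\Psi^{-1}(t)$ are fixed by $\ell$, hence $\Psi^{-1}(t)$ lies in one fixed hypercube and $g(t)=f(\Psi^{-1}(t))$ takes the corresponding constant value. This yields the claimed partition of $(0,1)$ into $2^{kd}$ pieces on which $g$ is constant.

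For part (ii) I would take the smooth function $f(\bx)=x_1$ (a coordinate projection), so that $g$ becomes pure digit extraction: writing $t=[0.c_1c_2c_3\dots]_2$, de-interleaving gives $g(t)=x_1=[0.c_1c_{d+1}c_{2d+1}\dots]_2$, i.e. $g$ keeps every $d$-th binary digit starting at the first. To exhibit a discontinuity I would localize at $t^\star=2^{-d}$ and compare the two one-sided behaviours. Approaching from below along $t_m=2^{-d}-2^{-d(m+1)}=[0.\,\underbrace{0\dots0}_{d}\underbrace{1\dots1}_{dm}0\dots]_2$, the extracted digits are a $0$ followed by $m$ ones, so $g(t_m)=\tfrac12-2^{-(m+1)}\to\tfrac12$; approaching from above along $t_m'=2^{-d}+2^{-d(m+1)}$, the two isolated $1$-digits sit at positions $\equiv 0 \pmod d$, which are never extracted (here $d\ge 2$ is used), so $g(t_m')=0$. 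Since the left and right limits are $\tfrac12$ and $0$, they cannot both equal $g(t^\star)$, so $g$ is discontinuous at $2^{-d}$. Because this argument only compares two one-sided limits, it is insensitive to the value $g(2^{-d})$ and hence to which $B$-adic representations were selected.

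The delicate point in both parts, and the step I expect to require the most care, is the non-uniqueness of binary expansions together with the fact that $g$ is a priori only defined on $\operatorname{Image}(\Psi)$. In (i) this is handled by working on open intervals, where no endpoint ambiguity occurs and $2^{kd}t$ avoids the integers; one should still note that $\Psi^{-1}(t)$ may land on a cube face for a negligible set of $t$, where the boundary value of $f$ is irrelevant to the piecewise-constant conclusion. In (ii) one must ensure the witnessing sequences actually lie in $\operatorname{Image}(\Psi)$; this is arranged by replacing the trailing zeros of $t_m,t_m'$ with a fixed non-dyadic tail, which makes every de-interleaved coordinate non-dyadic (so its unique expansion is the selected one) while perturbing the extracted values by only $O(2^{-m})$, leaving both limits unchanged.
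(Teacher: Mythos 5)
Your proof is correct and follows essentially the same route as the paper: part (i) by tracking how the first $kd$ interleaved binary digits of $t$ fix the first $k$ binary digits of every coordinate of $\Psi^{-1}(t)$, and part (ii) by exhibiting mismatched one-sided limits of the de-interleaving map at a dyadic point. The only differences are cosmetic --- you use the scalar projection $f(\bx)=x_1$ and localize the discontinuity at $2^{-d}$ where the paper uses the identity map and the point $1/2$ --- and your treatment of the non-uniqueness of binary expansions and of the domain of $g$ is, if anything, more careful than the paper's.
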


\begin{proof}
{\it (i):} If $x\in (\ell 2^{-kd}, (\ell+1)2^{-kd}),$ we can write $x=\Delta+\ell 2^{-kd}$ with $0<\Delta<2^{-kd}.$ There exist thus $\ell_1, \ldots, \ell_d \in \{0,2^k-1\},$ such that $\Psi^{-1}(x)=\Psi^{-1}(\Delta) +(\ell_1 2^{-k},\ldots, \ell_d 2^{-k}).$ Since $\Psi^{-1}(\Delta) \in (0,2^{-k})\times \ldots \times (0,2^{-k}),$ the result follows from $g=f\circ \Psi^{-1}.$ \\
{\it (ii):} If $f$ is the identity, $g=f\circ \Psi^{-1}=\Psi^{-1}.$ For $x\uparrow 1/2,$ we find that $\Psi^{-1}(x) \rightarrow (1/2,1,1,\ldots, 1)$ and  for $x \downarrow 1/2,$ $\Psi^{-1}(x) \rightarrow (1/2,0,0,\ldots,0).$ Even stronger, every point with finite binary representation is a point of discontinuity.
\end{proof}

The discontinuity of the space-filling map $\Psi^{-1}$ causes $g$ to be more irregular than $f.$ Many constructions of space-filling curves are known but to obtain a representation of KA type $\Psi$ needs to be an additive function. The additivity condition rules out most of the canonical choices, such as for instance the Hilbert curve. Below, we use for $\Psi^{-1}$ the Lebesgue curve and show that this then leads to a representation that allows to transfer smoothness properties of $f$ to smoothness properties on $g$ and therefore overcomes the shortcomings of the representation in \eqref{eq.Z_curve_represent}. In contrast to the earlier result, $g$ is now a function that maps from the Cantor set, in the following denoted by $\mC,$ to the real numbers.

\begin{thm}
\label{thm.KA2_lem}
For fixed dimension $d\geq 2,$ there exists a monotone function $\phi: [0,1] \rightarrow \mC$ (the Cantor set) such that for any function $f:[0,1]^d \rightarrow \R,$ we can find a function $g: \mC\rightarrow \R$ such that 
\begin{compactitem}
\item[(i)] 
\begin{align}
	f(x_1, \ldots, x_d) = g \Big( 3\sum_{p=1}^d 3^{-p} \phi(x_p) \Big);
	\label{eq.Lebesgue_curve_represent}
\end{align}
\item[(ii)] if $f:[0,1]^d \rightarrow \R$ is continuous, then also $g: \mC\rightarrow \R$ is continuous;
\item[(iii)] if there exists $\beta \leq 1$ and a constant $Q,$ such that $|f(\bx)-f(\by)|\leq Q|\bx-\by|_\infty^\beta,$ for all $\bx, \by \in [0,1]^d,$ then, 
\begin{align*}
	|g(x)-g(y)|\leq 2^\beta Q |x-y|^{\frac{\beta\log 2}{d\log 3}}, \quad \text{for all} \ x,y \in \mC;
\end{align*}
\item[(iv)] if $f(\bx)=\bx,$ then, there exist sequences $(x_k)_k, (y_k)_k \subset \mC$ with $\lim_k x_k =\lim_k y_k$ and $$\big|g(x_k)-g(y_k)\big|_\infty=\Big (\frac{|x_k-y_k|}2\Big)^{\frac{\log 2}{d\log 3}}.$$

\end{compactitem}
\end{thm}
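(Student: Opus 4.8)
The plan is to mimic the proof of Lemma~\ref{lem.KA_lem}, but to replace the $B$-adic digit expansion by a map that sends binary digits into the Cantor set, so that the interior map becomes the (additive) Lebesgue space-filling curve. Fix for every $x\in[0,1]$ one binary expansion $x=\sum_{j\ge 1}a_j^x2^{-j}$ with $a_j^x\in\{0,1\}$, and define $\phi(x)=\sum_{j\ge 1}2a_j^x\,3^{-(d(j-1)+1)}$. Then $\phi$ is monotone, takes values in $\mC$ (all ternary digits lie in $\{0,2\}$), and spaces the doubled binary digits of $x$ out with $d-1$ ternary zeros between them. Multiplying by $3^{-p}$ shifts these digits $p$ places to the right, and the crucial bookkeeping is that as $p$ runs through $1,\dots,d$ and $j$ through $\N$, the positions $d(j-1)+1+p$ tile $\{2,3,4,\dots\}$ bijectively. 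Consequently $\Phi(\bx):=\sum_{p=1}^d 3^{-p}\phi(x_p)$ has a ternary expansion in $\{0,2\}$ that interleaves the (doubled) binary digits of $x_1,\dots,x_d$; no carries occur, $\Phi$ maps $[0,1]^d$ into $\mC$, and it is injective because each $x_p$ can be read back off its digits. Setting $g=f\circ\Phi^{-1}$ on the image $\Phi([0,1]^d)$ proves (i); since the image equals the clopen piece $\mC\cap[0,1/3]$ of the totally disconnected set $\mC$, one may extend $g$ to all of $\mC$ (e.g.\ constantly on the other half) without destroying continuity, which settles the domain of $g$ in (ii).

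The heart of the argument is a comparison between the sup-metric on $[0,1]^d$ and the Euclidean metric on $\mC$ under $\Phi$. Let $x=\Phi(\bx)$ and $y=\Phi(\by)$ be distinct and let $i$ be the first ternary position at which they differ. Because both are Cantor numbers, one digit is $2$ and the other $0$ at position $i$ while the tail contributes at most $3^{-i}$, so $|x-y|\ge 3^{-i}$. Writing $i=d(m-1)+1+p$ with $1\le p\le d$, agreement on the earlier positions forces all $d$ coordinates to share their first $m-1$ binary digits, whence $|\bx-\by|_\infty\le 2^{-(m-1)}$. Using $p\le d$, so $i\le dm+1$, together with the central identity $3^{d\alpha}=2$ for $\alpha=\tfrac{\log 2}{d\log 3}$, these two estimates combine to $|\bx-\by|_\infty\le C\,|x-y|^{\alpha}$ for an explicit constant $C$; a careful accounting of the contribution of the shared ternary tail is what pins down the precise constant $2^{\beta}$ claimed in (iv). Since $g(x)-g(y)=f(\bx)-f(\by)$, the Hölder assumption on $f$ then yields $|g(x)-g(y)|\le Q|\bx-\by|_\infty^{\beta}\le 2^{\beta}Q\,|x-y|^{\beta\log 2/(d\log 3)}$, which is (iv).

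Part (ii) follows from the same comparison. It shows that $\Phi^{-1}\colon\mC\to[0,1]^d$ is uniformly $\alpha$-Hölder, so if $f$ is continuous then $g=f\circ\Phi^{-1}$ is uniformly continuous on the image, and the extension to $\mC$ stays continuous because the image is clopen in $\mC$. For (iii) I would exhibit the sharpness of the exponent explicitly. Take $f$ to be the identity (equivalently, for a scalar statement, the projection onto the $(d-1)$-st coordinate, which is $1$-Lipschitz, so $\beta=1$), and set $\bx_k=\bzero$ and $\by_k=2^{-k}\be_{d-1}$. A direct computation gives $\Phi(\bx_k)=0$ and $\Phi(\by_k)=2\cdot 3^{-dk}$, because the $k$-th binary digit of coordinate $d-1$ lands exactly at ternary position $d(k-1)+1+(d-1)=dk$. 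Hence with $x_k=0$ and $y_k=2\cdot 3^{-dk}$ one has $x_k,y_k\to 0$ and $|g(x_k)-g(y_k)|=2^{-k}=(|x_k-y_k|/2)^{\log 2/(d\log 3)}$, the desired equality.

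The main obstacle throughout is not the construction but the metric bookkeeping. One must handle the non-uniqueness of the binary and ternary representations at the (countably many) finite-expansion points exactly as in Lemma~\ref{lem.KA_lem}, so that $\phi$, $\Phi$ and $\Phi^{-1}$ are well defined and the first-differing-digit argument is valid. The delicate quantitative point is to extract the sharp Hölder exponent $\tfrac{\beta\log 2}{d\log 3}$—and, more delicately still, the sharp constant—from the interleaving; the exponent is forced by the relation $3^{d\alpha}=2$ between the ternary scale on $\mC$ and the binary scale on $[0,1]^d$, while the constant requires tracking how the shared leading digits of $x$ and $y$ constrain the divergence of the tails of $\bx$ and $\by$.
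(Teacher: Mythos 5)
Your proposal follows essentially the same route as the paper: double the binary digits, space them out in a ternary expansion so that the interleaving $\Phi(\bx)=\sum_{p}3^{-p}\phi(x_p)$ lands in the Cantor set, set $g=f\circ\Phi^{-1}$, prove the H\"older bound for $\Phi^{-1}$ by a first-differing-ternary-digit argument, and exhibit sharpness with $\bx_k=\bzero$ and a single coordinate equal to $2^{-k}$. One small caveat: your extra factor $3^{-1}$ in $\phi$ (digits at positions $d(j-1)+1+p$ tiling $\{2,3,\dots\}$ rather than the paper's $d(j-1)+p$ tiling $\{1,2,\dots\}$) shifts the scales so that the straightforward estimate gives $|\bx-\by|_\infty\le 2^{-(m-1)}$ against $|x-y|\ge 3^{-(dm+1)}$, i.e.\ the constant $2^{1+1/d}$ rather than $2$, so the claimed $2^\beta Q$ in (iv) does not drop out of your bookkeeping as stated; dropping the extra $3^{-1}$ (the paper's normalization, where $i\le dm$) yields the constant $2$ directly.
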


\begin{proof} The construction of the interior function is similar as in the proof of Lemma \ref{lem.KA_lem}. We associate with each $x\in [0,1]$ one binary representation $x=[0.a_1^xa_2^x \ldots ]_2$ and define  
\begin{align}
	\phi(x):=  \sum_{j=1}^ \infty \frac{2a_j^x}{3^{1+d(j-1)}}
	= \big[0.(2a_1^x)\underbrace{0\ldots\ldots0}_{(d-1)\text{-times}}(2a_2^x)\underbrace{0\ldots\ldots0}_{(d-1)\text{-times}} \big]_3.
	\label{eq.phi_def}
\end{align}
The function $\phi$ multiplies the binary digits by two (thus, only the values $0$ and $2$ are possible) and then expresses the digits in a ternary expansion adding $d-1$ zeros between each two digits. By construction, the Cantor set consists of all $y \in [0,1]$ that only have $0$ and $2$ as digits in the ternary expansion. This shows that $\phi:[0,1]\to \mathcal{C}.$ Define now
\begin{align}
	\Phi(x_1, \ldots,x_d):=3\sum_{p=1}^d 3^{-p} \phi(x_p) = \big[0.(2a_1^{x_1})(2a_1^{x_2})\ldots (2a_1^{x_d})(2a_2^{x_1}) \ldots \big]_3
	\label{eq.Phi_def}
\end{align}
where the right hand side is written in the ternary system. 
Because we can recover the binary representation of $x_1,\ldots, x_d,$ the map $\Phi$ is invertible. Since $2a_\ell^{x_r} \in \{0,2\}$ for all $\ell \geq 1$ and $r\in \{1,\ldots, d\},$ the image of $\Phi$ is contained in the Cantor set. We can now define the inverse by $\Phi^{-1}: \mC \rightarrow \R$ and set $g=f \circ \Phi^{-1}: \mC \rightarrow \R,$ proving $(i).$

In a next step of the proof, we show that 
\begin{align}
	\big |\Phi^{-1}(x)- \Phi^{-1}(y)\big |_\infty \leq  2 |x-y|^{\log 2/(d\log 3)}, \quad \text{for all} \ x,y \in \mC,
	\label{eq.Phi-1_Hoeld}
\end{align}
For that we extend the proof in \cite{Bader2013}, p.98. Observe that $\Phi^{-1}$ maps $x=[0.x_1x_2x_3\ldots ]_3$ to the vector $([0.(x_1/2)(x_{d+1}/2)\ldots ]_2, \ldots, [0.(x_d/2)(x_{2d}/2)\ldots]_2)^\top \in [0,1]^d.$ Given arbitrary $x,y\in \mC,$ $k^*=k^*(x,y)$ denotes the integer $k$ for which $3^{-(k+1)d}\leq |x-y| < 3^{-kd}.$ Suppose that the first $k^*d$ ternary digits of $x$ and $y$ are not all the same and denote by $J$ the position of the first digit of $x$ that is not the same as $y.$ Since only the digits $0$ and $2$ are possible, the difference between $x$ and $y$ can be lower bounded by $|x-y|\geq 2\cdot 3^{-J}-3^{-J},$ where the term $-3^{-J}$ accounts for the effect of the later digits. Thus $|x-y|\geq 3^{-J}$ and this is a contradiction with $|x-y|<3^{-k^*d}$ and $J\leq k^*d.$ Thus, the first $k^*d$ ternary digits of $x$ and $y$ coincide. Using the explicit form of $\Phi^{-1}$ this also implies that $\Phi^{-1}(x)$ and $\Phi^{-1}(y)$ coincide in the first $k^*$ binary digits in each component. This means that $|\Phi^{-1}(x)-\Phi^{-1}(y)|_\infty \leq 2^{-k^*}$ and together with the definition of $k^*,$ we find
\begin{align}
	\big|\Phi^{-1}(x)-\Phi^{-1}(y)\big|_\infty 
	\leq 2\cdot 2^{-(k^*+1)}
	= 2 \big(3^{-(k^*+1)d}\big)^{\frac{\log 2}{d\log 3}}
	\leq 2|x-y|^{\frac{\log 2}{d\log 3}}
	\label{eq.Phi}
\end{align}
proving \eqref{eq.Phi-1_Hoeld}, since $x,y \in \mC$ were arbitrary. Using again that $g=f\circ \Phi^{-1},$ $(ii)$ and $(iii)$ follow.

To prove $(iv),$ take $x_k=0$ and $y_k=2/3^{kd}.$ Then, $\Phi^{-1}(x_k)=(0, \ldots, 0)^\top$ and $\Phi^{-1}(y_k)=(0,\ldots, 0, 2^{-k})^\top.$ Rewriting this yields $|g(x_k)-g(y_k)|_\infty=|\Phi^{-1}(x_k)-\Phi^{-1}(y_k)|_\infty= (|x_k-y_k|/2)^{\frac{\log 2}{d\log 3}}$ for all $k\geq 1.$
\end{proof}

Thus, by restricting to the Cantor set, one can overcome the limitations of Netto's theorem mentioned at the beginning of the section. In fact by construction and \eqref{eq.Phi}, $\gamma=\Phi^{-1}$ is a surjective, invertible and continuous space-filling curve. The previous theorem is in a sense more extreme than the KA representation as the univariate interior function maps to a set of Hausdorff dimension $\log 2/\log 3 <1.$ \cite{MR1288945} uses a similar construction to prove embeddings of the function spaces generated by circuits into neural network function classes. 

Representation \eqref{eq.Lebesgue_curve_represent} has the advantage that smoothness imposed on $f$ translates into smoothness properties on $g.$  The reason is that the function $\Phi$ associates to each $\bx\in [0,1]^d$ one value in the Cantor set such that values that are far from each other in $[0,1]^d$ are not mapped to nearby values in the Cantor set. Based on this value in the Cantor set, the outer function $g$ reconstructs the function value $f(\bx).$ Since the distance of the values in $[0,1]^d$ is linked to the distance of the values in the Cantor set, local variability in the function $f$ does not lead to arbitrarily large fluctuations of the outer function $g.$ Therefore smoothness imposed on $f$ translates into smoothness properties on $g.$

A natural question is whether we gain or loose something if instead of approximating $f$ directly, we use \eqref{eq.Lebesgue_curve_represent} and approximate $g.$ Recall that the approximation rate should be $m^{-\beta}$ if $m^d$ is the number of free parameters of the approximating function, $\beta$ the smoothness and $d$ the dimension. Since $g$ is by $(iii)$ $\alpha$-smooth with $\alpha=\beta\log 2/(d\log 3)$ and is defined on a set with Hausdorff dimension $d^*=\log 2/\log 3,$ we see that there is no loss in terms of approximation rates since $\beta/d=\alpha/d^*.$ Thus, we can reduce multivariate function approximation to univariate function approximation on the Cantor set. This, however, only holds for $\beta \leq 1.$ Indeed, the last statement of the previous theorem means that for the smooth function $f(x)=x,$ the outer function $g$ is not more than $\beta\log 2/(d\log 3)$-smooth, implying that for higher order smoothness, there seems to be a discrepancy between the multivariate and univariate function approximation.

The only direct drawback of \eqref{eq.Lebesgue_curve_represent} compared to the traditional KA representation is that the interior function $\phi$ is discontinuous. We will see in Section \ref{sec.deepReLU} that $\phi$ can, however, be well approximated by a deep neural network.

It is also of interest to study the function class containing all $f$ that are generated by the representation in \eqref{eq.Lebesgue_curve_represent} for $\beta$-smooth outer function $g.$ Observe that if $g(x)=x,$ then $f$ coincides with the interior function which is discontinuous. This shows that for $\beta \leq 1,$ the class of all $f$ of the form \eqref{eq.Lebesgue_curve_represent} with $g$ a $\beta\log 2/(d\log 3)$-smooth function on the Cantor set $\mC$ is strictly larger than the class of $\beta$-smooth functions. Interestingly, the function class with Lipschitz continuous outer function $g$ contains all functions that are piecewise constant on a dyadic partition of $[0,1]^d.$

\begin{lem}
Consider representation \eqref{eq.Lebesgue_curve_represent} and let $k$ be a positive integer. If $f:[0,1]^d \rightarrow \R$ is piecewise constant on the $2^{kd}$ hypercubes $\times_{j=1}^d [\ell_j 2^{-k}, (\ell_j +1)2^{-k}),$ with $\ell_1, \ldots, \ell_d \in \{0,2^k-1\},$ then $g$ is a Lipschitz function with Lipschitz constant bounded by $2\|f\|_\infty 3^{kd}.$
\end{lem}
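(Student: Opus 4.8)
The plan is to exploit the digit-level correspondence between the dyadic partition of the domain $[0,1]^d$ and the ternary structure of the Cantor set $\mC,$ and then to split the estimate into two regimes depending on how close the points $x,y \in \mC$ are.

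First I would record the observation about $\Phi^{-1}$ already used in the proof of Theorem \ref{thm.KA2_lem}: the map $\Phi^{-1}$ sends $x=[0.x_1x_2x_3\ldots]_3$ to the vector whose $p$-th coordinate has binary expansion $[0.(x_p/2)(x_{p+d}/2)(x_{p+2d}/2)\ldots]_2.$ Hence the first $k$ binary digits of each of the $d$ coordinates of $\Phi^{-1}(x)$ are determined by the ternary digits $x_1,\ldots,x_{kd}$ (the largest relevant index being $d+(k-1)d=kd$), that is, by the first $kd$ ternary digits of $x.$ Since $f$ is constant on each hypercube $\times_{j=1}^d[\ell_j2^{-k},(\ell_j+1)2^{-k})$ and membership in such a hypercube is decided precisely by the first $k$ binary digits of each coordinate, it follows that $g(x)=f(\Phi^{-1}(x))$ depends only on the first $kd$ ternary digits of $x.$

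Next I would fix $x,y\in\mC$ and distinguish two cases. If the first $kd$ ternary digits of $x$ and $y$ agree, the observation above gives $g(x)=g(y),$ so the Lipschitz bound holds trivially. If they do not all agree, let $J\leq kd$ be the first position at which the ternary expansions differ; reusing the digit argument from the proof of Theorem \ref{thm.KA2_lem} (only the digits $0$ and $2$ occur for points of $\mC$) yields $|x-y|\geq 3^{-J}\geq 3^{-kd}.$ Combined with the trivial bound $|g(x)-g(y)|\leq 2\|f\|_\infty,$ this gives
\begin{align*}
	|g(x)-g(y)|\leq 2\|f\|_\infty = 2\|f\|_\infty\,3^{kd}\,3^{-kd}\leq 2\|f\|_\infty\,3^{kd}\,|x-y|,
\end{align*}
which is exactly the claimed Lipschitz constant.

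The conceptual steps are short; the main thing to get right is the bookkeeping that the first $kd$ ternary digits of $x$ control which hypercube $\Phi^{-1}(x)$ lands in. The one technical point I expect to need care is the interaction between the half-open boundaries of the hypercubes and the non-uniqueness of $B$-adic expansions: since the construction fixes one binary representation per point, I would verify that endpoints are consistently assigned, so that hypercube membership is genuinely a function of the first $k$ binary digits per coordinate. Once this is checked, the two-case split immediately delivers the bound.
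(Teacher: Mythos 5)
Your proof is correct and follows essentially the same route as the paper: the paper's cylinder sets $I(\ba)=\{[0.a_1\ldots a_{kd}b_1b_2\ldots]_3\}$ are exactly your equivalence classes ``first $kd$ ternary digits agree,'' on which $g$ is constant, and the same $3^{-kd}$ separation between distinct classes combined with the trivial bound $2\|f\|_\infty$ gives the Lipschitz constant. The boundary/representation caveat you flag (dyadic points must be assigned the terminating binary expansion so that hypercube membership is determined by the first $k$ digits per coordinate) is a genuine subtlety that the paper's containment $\Phi^{-1}(I(\ba))\subseteq \times_{j=1}^d[\ell_j2^{-k},(\ell_j+1)2^{-k})$ also silently relies on, so you are right to check it.
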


\begin{proof}
Let $\phi$ and $\Phi$ be the same as in the proof of Theorem \ref{thm.KA2_lem}. For any vector $\ba=(a_1, \ldots, a_{kd})\in \{0,2\}^{kd}$ define $I(\ba)=\{[0.a_1\ldots a_{kd}b_1 b_2 \ldots ]_3:b_1,b_2,\ldots \in \{0,2\}\}.$ There exist integers $\ell_1, \ldots, \ell_d \in \{0,\ldots,2^k-1\}$ such that $\Phi^{-1}(I(\ba)) \subseteq \times_{j=1}^d [\ell_j 2^{-k}, (\ell_j +1)2^{-k}).$ Since $f$ is constant on these dyadic hypercubes, $g(I(\ba))=(f\circ \Phi^{-1})(I(\ba))=$ const. If $\ba, \widetilde \ba \in \{0,2\}^{kd}$ and $\ba \neq \widetilde \ba,$ then, arguing as in the proof of Theorem \ref{thm.KA2_lem}, we find that $|x-y|\geq 3^{-kd}$ whenever $x\in I(\ba)$ and $y \in I(\widetilde \ba).$ Therefore, we have $|g(x)-g(y)|=0$ if $x,y \in I(\ba)$ and $|g(x)-g(y)|\leq 2\|g\|_\infty\leq 2\|f\|_\infty \leq 2\|f\|_\infty 3^{kd}|x-y|$ if $x\in I(\ba)$ and $y\in I(\widetilde \ba).$ Since $\ba, \widetilde \ba$ were arbitrary, the result follows.
\end{proof}


It is important to realize that the space-filling curves and fractal shapes occur because of the exact identity. It is natural to wonder whether the KA representation leads to an interesting approximation theory. For that, one wants to truncate the number of digits in \eqref{eq.phi_def}, hence reducing the complexity of the interior function. We obtain an approximation bound that only depends on the dimension $d$ through the smoothness of the outer function $g$.


\begin{lem}\label{lem.f_approx}
Let $d\geq 2$ and suppose $K$ is a positive integer. For $x=[0.a_1^xa_2^x \ldots ]_2,$ define $\phi_K(x):= \sum_{j=1}^K 2a_j^x 3^{-1-d(j-1)}.$ If there exists $\beta \leq 1$ and a constant $Q,$ such that $|f(\bx)-f(\by)|\leq Q|\bx-\by|_\infty^\beta,$ for all $\bx, \by \in [0,1]^d,$ then, we can find a univariate function $g,$ such that $|g(x)-g(y)|\leq 2^\beta Q |x-y|^{\frac{\beta\log 2}{d\log 3}},$ for all $x,y \in \mC,$ and 
\begin{align*}
	\Big| f(\bx) - g\Big( 3\sum_{p=1}^d 3^{-p} \phi_K(x_p) \Big) \Big| \leq  2Q 2^{-\beta K}, \quad \text{for all} \ \bx=(x_1,\ldots, x_d)^\top \in [0,1]^d.
\end{align*}
Moreover, $\|f\|_{L^\infty([0,1]^d)}=\|g\|_{L^\infty(\mC)}.$
\end{lem}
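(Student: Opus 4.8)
The plan is to use the very same outer function as in Theorem~\ref{thm.KA2_lem}, namely $g=f\circ\Phi^{-1}:\mC\to\R$, and to argue that replacing the interior function $\phi$ by its truncation $\phi_K$ only perturbs the argument of $g$ by a controllably small amount that stays inside the Cantor set. With this choice the first and the last assertion are essentially free. The bound $|g(x)-g(y)|\le 2^\beta Q|x-y|^{\beta\log2/(d\log3)}$ is exactly part~(iv) of Theorem~\ref{thm.KA2_lem}, whose hypothesis coincides with the one imposed here. For the norm identity, I would note that $\Phi^{-1}$ maps $\mC$ onto all of $[0,1]^d$ (given $\bx$, pick binary expansions, form the corresponding Cantor point, and read the digits back), so that $\|g\|_{L^\infty(\mC)}=\sup_{z\in\mC}|f(\Phi^{-1}(z))|=\|f\|_{L^\infty([0,1]^d)}$. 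The whole content is therefore the approximation inequality.

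For that, I would first record the structural fact that $\Phi_K(\bx)=\sum_{p=1}^d 3^{-p}\phi_K(x_p)$ is nothing but $\Phi(\bx)$ with all ternary digits beyond position $Kd$ deleted: the summand $3^{-p}\phi_K(x_p)$ places the digit $2a_j^{x_p}$ at ternary position $(j-1)d+p$ for $1\le j\le K$ and contributes nothing afterwards. Hence $\Phi_K(\bx)$ agrees with $\Phi(\bx)$ in its first $Kd$ ternary digits and is zero from then on; in particular all its digits lie in $\{0,2\}$, so $\Phi_K(\bx)\in\mC$ and $g(\Phi_K(\bx))$ is well defined.

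Next I would identify $\Phi^{-1}(\Phi_K(\bx))$ explicitly. Reading off the binary digits coordinatewise gives $\Phi^{-1}(\Phi_K(\bx))=\bx'$, where $x_p'=[0.a_1^{x_p}\ldots a_K^{x_p}00\ldots]_2$ is the truncation of $x_p$ to its first $K$ binary digits, whence $|x_p-x_p'|\le\sum_{j>K}2^{-j}=2^{-K}$ for every $p$. Using $g(\Phi_K(\bx))=f(\bx')$ and the $\beta$-smoothness of $f$ then yields directly
\begin{align*}
	\Big|f(\bx)-g\Big(\sum_{p=1}^d 3^{-p}\phi_K(x_p)\Big)\Big|
	=\big|f(\bx)-f(\bx')\big|\le Q|\bx-\bx'|_\infty^\beta\le Q\,2^{-\beta K},
\end{align*}
which is already stronger than the claimed $Q\,2^{-\beta(K-4)}$. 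One may instead route the estimate through $g$: by part~(i) of Theorem~\ref{thm.KA2_lem} we have $f(\bx)=g(\Phi(\bx))$, and since $|\Phi(\bx)-\Phi_K(\bx)|\le\sum_{j>Kd}2\cdot3^{-j}=3^{-Kd}$, the Hölder bound together with the identity $(3^{-Kd})^{\beta\log2/(d\log3)}=2^{-\beta K}$ (because $3^{\log2/\log3}=2$) gives $|f(\bx)-g(\Phi_K(\bx))|\le 2^\beta Q\,2^{-\beta K}$. Either way the stated bound holds with room to spare, which explains the generous constant in the $-4$.

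The only step that genuinely needs care is the passage $\Phi^{-1}(\Phi_K(\bx))=\bx'$. I expect the main (and essentially the only) obstacle to be the bookkeeping of ternary and binary expansions rather than any analytic estimate: one must use that points of the Cantor set possess a unique ternary expansion with digits in $\{0,2\}$, so that the digit-reading map is unambiguous and truncation keeps us inside $\mC$. The usual dyadic ambiguity of the endpoint $x_p'$ is then harmless, since $f$ is ultimately evaluated at a genuine point of $[0,1]^d$ and not at a chosen representation.
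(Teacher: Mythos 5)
Your proposal is correct, and its second route is precisely the paper's own argument: write $f(\bx)=g(\Phi(\bx))$ with $g=f\circ\Phi^{-1}$ from Theorem \ref{thm.KA2_lem}, observe that both $\Phi(\bx)$ and $\Phi_K(\bx)$ lie in $\mC$, and apply the H\"older bound for $g$ to the tail $\Phi(\bx)-\Phi_K(\bx)$. Your tail estimate $\sum_{j>Kd}2\cdot 3^{-j}=3^{-Kd}$ is in fact sharper than the paper's cruder bound $6\cdot 3^{-d(K-1)}$, which is where the slack absorbed by the constant $2^{-\beta(K-4)}$ comes from; your version gives $2^{\beta}Q2^{-\beta K}$. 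Your first route --- identifying $\Phi^{-1}(\Phi_K(\bx))$ as the coordinatewise binary truncation $\bx'$ and invoking the H\"older property of $f$ directly, bypassing $g$ altogether for the approximation step --- does not appear in the paper and is a slightly more transparent argument with an even better constant $Q2^{-\beta K}$; its only cost is the digit bookkeeping you already flag, namely that $\Phi^{-1}$ must be read as the digit-reading map on all of $\mC$ (using the unique ternary expansion with digits in $\{0,2\}$), so that $g(\Phi_K(\bx))$ is well defined and equals $f(\bx')$ regardless of which binary representation of $x_p'$ was ``selected.'' That is the same implicit convention the paper relies on, so there is no gap; your treatment of the H\"older bound on $g$ and of $\|f\|_{L^\infty([0,1]^d)}=\|g\|_{L^\infty(\mC)}$ (via surjectivity of the digit-reading map onto $[0,1]^d$) also matches, and if anything is more explicit than, the paper's one-line justification.
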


\begin{proof}
From the geometric sum formula, $\sum_{q=0}^\infty 3^{-q}=3/2.$ Let $\phi$ and $g$ be as in Theorem \ref{thm.KA2_lem} and $\phi_K$ as defined in the statement of the lemma. Since $\beta \leq 1$, we then have that $|g(x)-g(y)|\leq 2Q |x-y|^{\frac{\beta\log 2}{d\log 3}},$ for all $x,y \in \mC.$ Moreover, \eqref{eq.Phi_def} shows that $3\sum_{p=1}^d 3^{-p} \phi(x_p)$ and $3\sum_{p=1}^d 3^{-p} \phi_K(x_p)$ are both in the Cantor set $\mC$ and have the same first $Kd$ ternary digits. Thus, using  \eqref{eq.Lebesgue_curve_represent}, we find
\begin{align*}
	\Big| f(\bx) - g\Big( 3\sum_{p=1}^d 3^{-p} \phi_K(x_p) \Big) \Big|
	&= 
	\Big| g\Big( 3\sum_{p=1}^d 3^{-p} \phi(x_p) \Big) - g\Big( 3\sum_{p=1}^d 3^{-p} \phi_K(x_p) \Big) \Big| \\
	&\leq 
	2 Q \Big |3\sum_{p=1}^d 3^{-p} \big(\phi(x_p)-\phi_K(x_p)\big) \Big|^{\frac{\beta\log 2}{d\log 3}} \\
	&\leq 2 Q \Big |2\sum_{q=Kd+1}^\infty 3^{-q} \Big|^{\frac{\beta\log 2}{d\log 3}} \\
	&\leq 2 Q \Big |2 \cdot 3^{-dK-1} \sum_{q=0}^\infty 3^{-q} \Big|^{\frac{\beta\log 2}{d\log 3}} \\
	&\leq 2 Q 3^{-K\frac{\beta\log 2}{\log 3}}.
\end{align*}
Finally, $\|f\|_{L^\infty([0,1]^d)}=\|g\|_{L^\infty(\mC)}$ follows as an immediate consequence of the function representation  \eqref{eq.Lebesgue_curve_represent}.
\end{proof}

\begin{figure}[ht]
\begin{center}
	\includegraphics[scale=0.5]{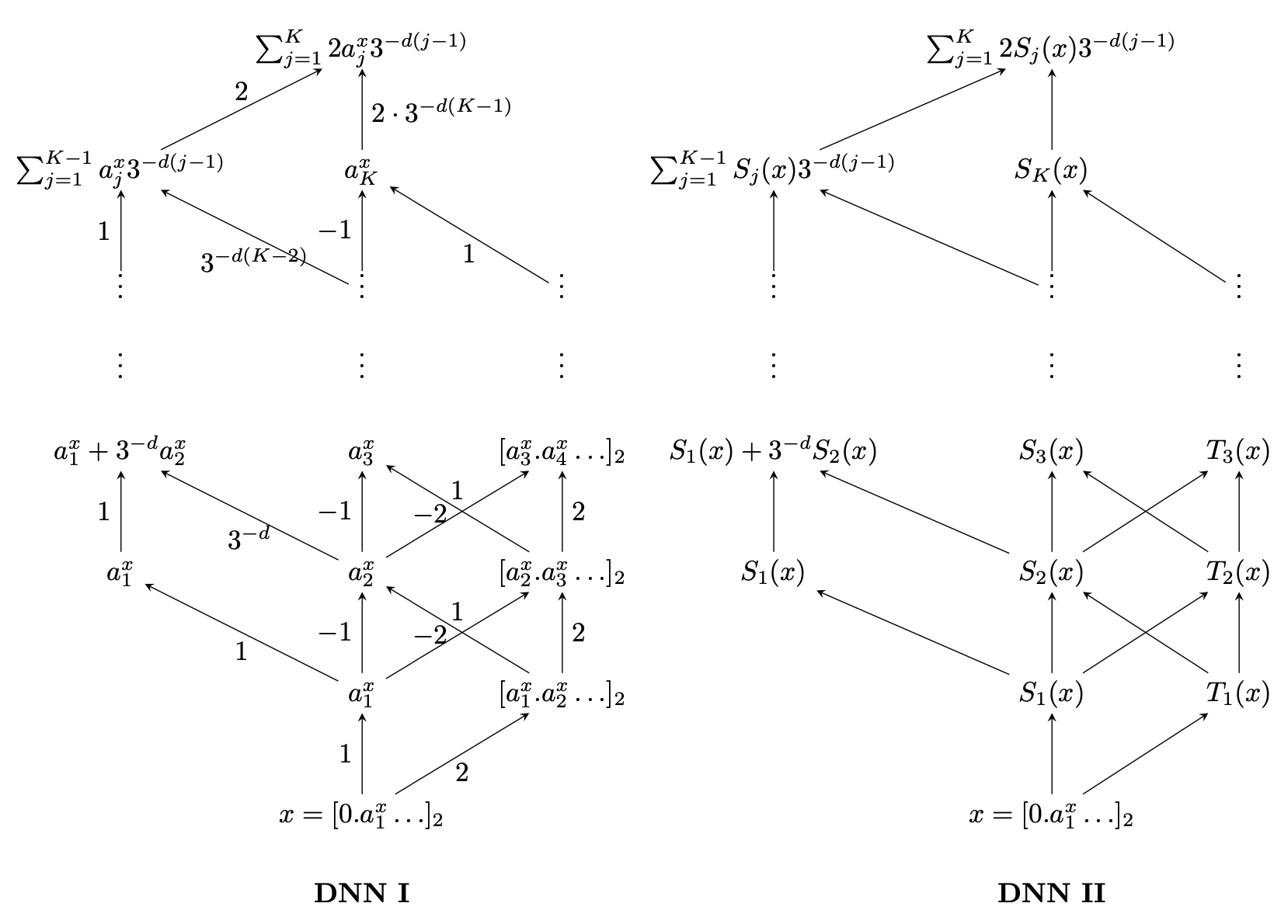} 
	\caption{\label{fig.bit_decoding} (Left) A deep neural network with $K$ hidden layers and width three computing the function $x=[0.a_1^xa_2^x\ldots ]_2 \mapsto 3\phi_K(x)=\sum_{j=1}^K 2a_j^x3^{-d(j-1)}$ exactly. In each hidden layer the linear activation function is applied to the left and right unit. The units in the middle use the threshold activation function $\sigma(x)=\mathbf{1}(x\geq 1/2).$ (Right) A deep ReLU network approximating the function $\phi_K.$ For the definitions of $S_r$ and $T_r$ see the proof of Theorem \ref{thm.main}.}
\end{center}
\end{figure}

\section{Deep ReLU networks and the KA representation}\label{sec.deepReLU}

This section studies the construction of deep ReLU networks imitating the KA approximation in Lemma \ref{lem.f_approx}. A deep/multilayer feedforward neural network is a function $\bx \mapsto f(\bx)$ that can be represented by an acyclic graph with vertices arranged in a finite number of layers. The first layer is called the input layer, the last layer is the output layer and the layers in between are called hidden layers. We say that a deep network has architecture $(L, (p_0,\dots,p_{L+1})),$ if the number of hidden layers is $L,$ and $p_0$, $p_j$ and $p_{L+1}$ are the number of vertices in the input layer, $j$-th hidden layer and output layer, respectively. The input layer of vertices represents the input $\bx.$ For all other layers, each vertex stands for an operation of the form $\by \mapsto \sigma(\ba^\top \by +b)$ with $\by$ the output (viewed as vector) of the previous layer, $\ba$ a weight vector, $b$ a shift parameter and $\sigma$ the activation function. Each vertex has its own set of parameters $(\ba,b)$ and also the activation function does not need to be the same for all vertices.  If for all vertices in the hidden layers the ReLU activation function $\sigma(x)=\max(x,0)$ is used and $L>1,$ the network is called a deep ReLU network. As common for regression problems, the activation function in the output layer will be the identity.

Approximation properties of deep neural networks for composed functions are studied in \cite{horowitz2007, Poggio2017, mhaskar2016, kohler2017, SH2017, bauer2019, 2019arXiv190702177N,2019arXiv190800695S, 2019arXiv190100137F}. These approaches do, however, not lead to straightforward constructions of ReLU networks exploiting the specific structure of the KA approximation
\begin{align}\label{eq.approx_identity_recall}
	f(\bx) \approx g\Big( 3\sum_{p=1}^d 3^{-p} \phi_K(x_p) \Big).
\end{align}
in Lemma \ref{lem.f_approx}. To find such a construction, recall that the classical neural network interpretation of the KA representation associates the interior function with the activation function in the first layer \cite{hechtnielsen1987}. Here, we argue that the interior function can be efficiently approximated by a deep ReLU network. The role of the hidden layers is to retrieve the next bit in the binary representation of the input.  Interestingly, some of the proposed network constructions to approximate $\beta$-smooth functions use a similar idea without making the link to the KA representation, see Section \ref{sec.rel_lit} for more details.

Figure \ref{fig.bit_decoding} gives the construction of a network computing $x=[0.a_1^xa_2^x\ldots ]_2 \mapsto 3\phi_K(x)=\sum_{j=1}^K 2a_j^x 3^{-d(j-1)}$ combining units with linear activation function $\sigma(x)=x$ and threshold activation function $\sigma(x) = \mathbf{1}(x \geq 1/2).$ The main idea is that for $x=[0.a_1^xa_2^x\ldots ]_2,$ we can extract the first bit using $a_1^x=\mathbf{1}(x\geq 1/2) = \sigma(x)$ and then define $2x-2\sigma(x)=2(x-a_1^x)=[0.a_2^xa_3^x \ldots ]_2.$ Iterating the procedure allows us to extract $a_2^x$ and consequently any further binary digit of $x.$ The deep neural network DNN I in Figure \ref{fig.bit_decoding} has $K$ hidden layers and network width three. The left units in the hidden layer successively build the output value; the units in the middle extract the next bit in the binary representation and the units on the right compute the remainder of the input after bit extraction. To learn the bit extraction algorithm, deep networks lead obviously to much more efficient representations compared to shallow networks. 

Constructing $d$ networks computing $\phi_K(x_p)$ for each $x_1, \ldots,x_d$ and combining them yields a network with $K+1$ hidden layers and network width $3d,$ computing the interior function $(x_1, \ldots, x_d) \mapsto 3\sum_{p=1}^d 3^{-p} \phi_K(x_p)$ in \eqref{eq.approx_identity_recall}. The overall number of non-zero parameters is of the order $Kd.$ To approximate a $\beta$-smooth function $f$ by a neural network via the KA approximation \eqref{eq.approx_identity_recall}, the interior step makes the approximating network deep but uses only very few parameters compared to the approximation of the univariate function $g.$ 

A close inspection of the network DNN I in Figure \ref{fig.bit_decoding} shows that all linear activation functions get non-negative input and can therefore be replaced by the ReLU activation function without changing the outcome. The threshold activation functions $\sigma(x) = \mathbf{1}(x \geq 1/2)$ can be arbitrarily well approximated by the linear combination of two ReLU units via $\eps^{-1}(x-(1-\eps)/2)_+ - \eps^{-1}(x-(1+\eps)/2)_+\approx \mathbf{1}(x\geq 1/2)$ for $\eps \downarrow 0.$ If one accepts potentially huge network parameters, the network DNN I in Figure \ref{fig.bit_decoding} can therefore be approximated by a deep ReLU network with $K$ hidden layers and network width four. Consequently, also the construction in \eqref{eq.approx_identity_recall} can be arbitrarily well approximated by deep ReLU networks. It is moreover possible to reduce the size of the network parameters by inserting additional hidden layers in the neural network, see for instance Proposition A.3 in \cite{2019arXiv190102220E}.

Throughout the following we write $\|f\|_p:=\|f\|_{L^p([0,1]^d)}.$ 

\begin{thm}\label{thm.main}
Let $p\in [1,\infty).$ If there exists $\beta \leq 1$ and a constant $Q,$ such that $|f(\bx)-f(\by)|\leq Q|\bx-\by|_\infty^\beta,$ for all $\bx, \by \in [0,1]^d,$ then, there exists a deep ReLU network $\wt f$ with $2K+3$ hidden layers, network architecture $(2K+3,(d,4d,\ldots,4d,d,1,2^{Kd}+1,1))$ and all network weights bounded in absolute value by $2(Kd\vee \|f\|_\infty) 2^{K(d \vee (p\beta))},$ such that
\begin{align*}
	\big\| f - \wt f \, \big\|_p \leq  2\big(Q+ \|f\|_\infty\big) 2^{-\beta K}.
\end{align*}
\end{thm}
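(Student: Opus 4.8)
The plan is to realize the KA approximation \eqref{eq.approx_identity_recall} by composing a \emph{deep} part that computes the interior map through iterated bit extraction with a single \emph{wide} layer encoding the outer function $g$ supplied by Lemma~\ref{lem.f_approx}. Writing $\Phi_K(\bx)=\sum_{p=1}^d 3^{-p}\phi_K(x_p)$ for the truncated interior function, Lemma~\ref{lem.f_approx} yields a $g:\mC\to\R$ with $\|g\|_\infty=\|f\|_\infty$ and $\|f-g\circ\Phi_K\|_\infty\le 16Q\,2^{-\beta K}$. It therefore suffices to construct a ReLU network $\wt f$ with $\|g\circ\Phi_K-\wt f\|_p\le 2\|f\|_\infty 2^{-\beta K}$ and conclude by the triangle inequality, using $(a^p+b^p)^{1/p}\le a+b$ for $p\ge 1$. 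The reason the statement is confined to $p<\infty$ should already be visible here: the threshold activation that drives bit extraction cannot be replaced by ReLU units without error at the dyadic boundaries, and such a boundary error is fatal in $L^\infty$ but harmless in $L^p$ provided the offending set has small measure.

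For the interior part I would take the network DNN~I of Figure~\ref{fig.bit_decoding}, run in parallel over the $d$ coordinates. In layer $r$ the middle unit extracts the binary digit $a_r^{x_p}=\mathbf{1}(\cdot\ge 1/2)$ of $x_p$, the right unit forms the shifted remainder $[0.a_{r+1}^{x_p}a_{r+2}^{x_p}\ldots]_2$, and the left unit accumulates the partial interior value; since accumulator and remainder always receive nonnegative input they equal their ReLU, while the threshold is realized by two shifted ReLU units $S_r,T_r$ of slope $\eps^{-1}$, so that $S_r-T_r$ equals $\mathbf{1}(\cdot\ge 1/2)$ outside an interval of width $\eps$ around $1/2$. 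This occupies $K$ hidden layers of width $4d$, after which two layers of widths $d$ and $1$ assemble the single interior value, giving the first $K+2$ hidden layers of the stated architecture. Crucially, I would pick the accumulation weights so that the width-$1$ layer outputs not $\Phi_K(\bx)$ itself but the order-isomorphic, \emph{equally spaced} value $\sum_{\ell=1}^{Kd}b_\ell 2^{-\ell}$ obtained by reading the interleaved digits $b_\ell\in\{0,1\}$ as a dyadic expansion. This reindexing is monotone, hence preserves the correspondence with the Cantor values $v_1<\dots<v_{2^{Kd}}$ actually taken by $\Phi_K$.

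The outer part is the width-$(2^{Kd}+1)$ layer, realizing the continuous piecewise linear function on $[0,1]$ interpolating the $2^{Kd}$ pairs $(i\,2^{-Kd},g(v_i))$; with $2^{Kd}$ ReLU breakpoints plus one bias unit this is represented exactly, so $\wt f(\bx)=g(\Phi_K(\bx))$ whenever all $Kd$ bits are read correctly. Because consecutive nodes are spaced by $2^{-Kd}$ and $|g(v_i)|\le\|f\|_\infty$, every interpolation slope is bounded by $2\|f\|_\infty 2^{Kd}$. This is exactly where the equal-spacing reindexing earns its keep: interpolating directly at the Cantor values $v_i$, whose minimal gap is $2\cdot 3^{-Kd}$, would force slopes of order $3^{Kd}$ and so would violate the weight bound, whereas the Morton indexing replaces $3^{-Kd}$ spacings by $2^{-Kd}$ spacings and brings the slopes down to the admissible order $\|f\|_\infty 2^{Kd}$.

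It remains to control the $L^p$ error and the weights \emph{simultaneously} through the single free parameter $\eps$, and I expect this balancing act to be the heart of the matter. The bit extraction is exact except on the bad set $B$ where some remainder lands within $\eps/2$ of $1/2$; a level-$r$ threshold is ambiguous on an $x_p$-slab of measure $\eps$, so a union bound over the $Kd$ thresholds gives $|B|\le dK\eps$. On $B^c$ one has $|f-\wt f|=|f-g\circ\Phi_K|\le 16Q\,2^{-\beta K}$, while on $B$ one uses only $|f-\wt f|\le 2\|f\|_\infty$, whence $\|g\circ\Phi_K-\wt f\|_p\le 2\|f\|_\infty(dK\eps)^{1/p}$. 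Choosing $\eps=2^{-p\beta K}/(dK)$ makes this $\le 2\|f\|_\infty 2^{-\beta K}$ as required and pins $\eps^{-1}=dK\,2^{p\beta K}$. The largest weights are then the threshold slopes $\eps^{-1}$, of order $K2^{p\beta K}$, in the deep part and the interpolation slopes, of order $\|f\|_\infty 2^{Kd}$, in the outer layer; their maximum is of the order $(K\vee\|f\|_\infty)\,2^{K(d\vee p\beta)}$ claimed, the bookkeeping of the remaining numerical constants being routine. The conceptual point the exponent $d\vee(p\beta)$ records is precisely the tension identified above: shrinking $B$ to hit the $L^p$ target inflates $\eps^{-1}$, so $p\beta$ is the price of trading the non-constructive interior function for a ReLU-computable one, while $d$ is the price of the wide lookup layer.
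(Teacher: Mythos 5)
Your proposal follows the paper's proof in all essential respects: the same three-way split into (a) a deep bit-extraction network for the interior map with thresholds replaced by steep two-ReLU ramps, (b) a single wide layer piecewise-linearly interpolating the outer function $g$ from Lemma \ref{lem.f_approx}, and (c) an $L^p$ error bound obtained by combining the Lemma \ref{lem.f_approx} bound on the good set with the trivial $2\|f\|_\infty$ bound on the small bad set where some threshold is ambiguous, with the ramp slope $\eps^{-1}$ calibrated against the bad-set measure exactly as in the paper. The one place you genuinely diverge is the control of the outer-layer weights. You claim that interpolating at the Cantor values (minimal gap of order $3^{-Kd}$) \emph{would violate} the weight bound and therefore reindex the interior output to equally spaced dyadic points via the order-isomorphic Morton value. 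That reindexing is a valid device, but the claim motivating it is wrong: the paper does interpolate at the Cantor points and absorbs the resulting slopes of order $\|f\|_\infty 3^{Kd}$ using the identity $a(x-x_j)_+=\sqrt{a}(\sqrt{a}\,x-\sqrt{a}\,x_j)_+$, which splits each large coefficient across two consecutive layers so that each individual weight is of order $\sqrt{\|f\|_\infty}\,3^{Kd/2}\le \|f\|_\infty 2^{Kd}$ (up to edge cases in $\|f\|_\infty$). So both mechanisms work; yours trades a rescaling trick for a relabelling of interpolation nodes. Two smaller bookkeeping points: the paper realizes each bit-extraction step with \emph{two} hidden layers (computing $S_{j+1}=S_1(T_j-S_j)$ needs an intermediate layer), which is why the stated architecture has $2K$ layers of width $4d$ rather than the $K$ you count; and your bad-set measure $dK\eps$ correctly carries the union-bound factor $d$ over coordinates, which forces $\eps^{-1}=dK2^{p\beta K}$ and leaves your weight bound off from the stated $2(K\vee\|f\|_\infty)2^{K(d\vee(p\beta))}$ by a dimension-dependent constant. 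Neither affects the substance of the argument.
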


\begin{proof}
The proof consists of four parts. In part (A) we construct a ReLU network mimicking the approximand constructed in Lemma \ref{lem.f_approx}. For that we first build a ReLU network with architecture $(2K,(1,4,\ldots,4,1))$ imitating the function $x=[0.a_1^xa_2^x\ldots ]_2 \mapsto 3\phi_K(x)=\sum_{j=1}^K  2a_j^x 3^{-d(j-1)}.$ In part (B), it is shown that the ReLU network approximation coincides with the function $3\phi_K$ on a subset of $[0,1]^d$ with Lebesgue measure $\geq 1- 2^{-K\beta p}.$ In part (C), we construct a neural network approximation for the outer function $g$ in Lemma \ref{lem.f_approx}. The approximation error is controlled in Part (D).

{\it (A):} Let $r$ be the largest integer such that $2^r\leq 2Kd 2^{K \beta p}$ and set $S_1(x):=2^r (x-1/2 +2^{-r-1})_+-2^r (x-1/2 -2^{-r-1})_+$ and $T_1(x):=2x.$ Given $S_j(x), T_j(x),$ we can then define 
\begin{align}
	T_{j+1}(x):=\big(2T_j(x)-2S_j(x)\big)_+, \quad 
	S_{j+1}(x):=S_1\big( T_j(x) -S_j(x)\big).
	\label{eq.TS_update}
\end{align}
There exists a ReLU network with architecture $(1,(1,2,1))$ and all network weights bounded in absolute value by $2^r$ computing the function $x\mapsto S_1(x).$ Similarly, there exists a ReLU network with architecture $(1,(2,2,1))$ computing $(S_j(x),T_j(x)) \mapsto S_{j+1}(x)=S_1( T_j(x) -S_j(x)).$ Since $S_1(x) \geq 0,$ we have that $(S_j(x))_+=S_j(x)$ and $T_j(x)=(T_j(x))_+.$ Because of that, we can now concatenate these networks as illustrated in Figure \ref{fig.bit_decoding} to construct a deep ReLU network computing $x\mapsto \sum_{j=1}^K 2S_j(x) 3^{-d(j-1)}.$ Recall that computing $S_{j+1}(x)$ from $(S_j(x),T_j(x))$ requires an extra layer with two nodes that is not shown in Figure \ref{fig.bit_decoding}. Thus, any arrow, except for the ones pointing to the output, adds one additional hidden layer to the ReLU network. The overall number of hidden layers is thus $2K.$ Because of the two additional nodes in the non-displayed hidden layers, the width in all hidden layers is four and thus the overall architecture of this deep ReLU network is $(2K,(1,4,\dots,4,1)).$ By checking all edges, it can be seen that all network weights are bounded by $2^r\leq 2Kd 2^{K\beta p}.$

{\it (B):} Recall that $x=[0.a_1^xa_2^x\dots]_2.$ We now show that on a large subset of the unit interval, it holds that $S_j(x)=a_j^x$ and $T_j(x)=[a_j^x.a_{j+1}^xa_{j+2}^x\dots]_2$ for all $j=1,\dots, K$ and therefore also $\sum_{j=1}^K 2S_j(x) 3^{-d(j-1)}=\sum_{j=1}^K 2a_j^x 3^{-d(j-1)}=3\phi_K(x).$

We have that $S_1(x)=\mathbf{1}(x>1/2),$ whenever $|x-1/2|\geq 2^{-r-1}.$ Set $A_{j,r}:=\{x :|[0.a_j^xa_{j+1}^x \dots]_2 -1/2|\geq 2^{-r-1}\}.$ If $x\in A_{j,r},$ then, $S_1([0.a_j^xa_{j+1}^x\dots]_2)=a_j^x.$ Thus, if $S_j(x)=a_j^x,$ $T_j(x)=[a_j^x.a_{j+1}^xa_{j+2}^x\dots]_2,$ and $x\in A_{j+1,r},$ then, \eqref{eq.TS_update} implies $S_{j+1}(x)=a_{j+1}^x$ and $T_{j+1}(x)=[a_{j+1}^x.a_{j+2}^xa_{j+3}^x\dots]_2.$ Hence, the deep ReLU network constructed in part (A) computes the function $[0,1] \ni x\mapsto 3\phi_K(x)$ exactly on the set $\cap_{j=1}^K A_{j,r}.$

For fixed $a_1^x,\dots,a_{j-1}^x \in \{0,1\},$ the set $\{x:|[0.a_j^xa_{j+1}^x \dots]_2 -1/2|< 2^{-r-1}\}$ is an interval of length $2^{-r-j+1}.$ As there are $2^{j-1}$ possibilities to choose $a_1^x,\dots,a_{j-1}^x \in \{0,1\},$ the complement $A_{j,r}^c=\{x\in [0,1]: x\notin A_{j,r}\}$ can be written as the union of $2^{j-1}$ subintervals of length $2^{-r-j+1}.$ The Lebesgue measure of $A_{j,r}^c$ is therefore bounded by $2^{-r}.$  Since $Kd 2^{K\beta p}\leq 2^r,$ we find that $(\cap_{j=1}^K A_{j,r})^c$ has Lebesgue measure bounded by $K2^{-r}\leq 2^{-K \beta  p}/d.$ This completes the proof for part (B).

\begin{figure}[ht]
\begin{center}
	\includegraphics[scale=0.5]{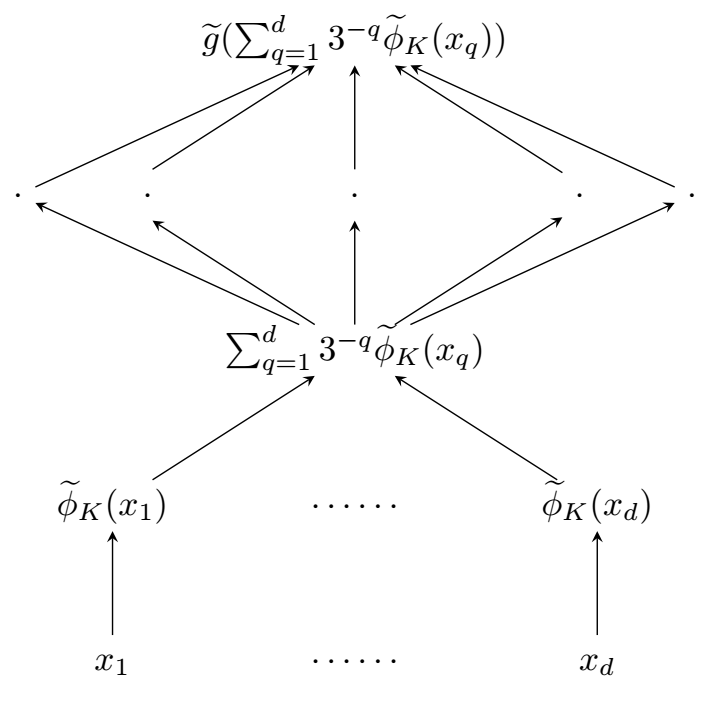} 
	\caption{\label{fig.NWcomp} Construction of the deep ReLU network in part (D) of the proof for Theorem \ref{thm.main}.}
\end{center}
\end{figure}

{\it (C):} Now we construct a shallow ReLU network interpolating the outer function $g$ in Lemma \ref{lem.f_approx} at the $2^{Kd}+1$ points $\{\sum_{j=1}^{Kd}  2t_j 3^{-j}: (t_1,\dots,t_{Kd}) \in \{0,1\}^{Kd}\} \cup \{1\}.$ Denote these points by $0=:s_0 < s_1< \dots < s_{2^{Kd}-1}<s_{2^{Kd}}:=1.$ For any $x\in [0,1],$ 
\begin{align*}
	\wt g(x) &:= g(s_0)+\sum_{j=1}^{2^{Kd}} \frac{g(s_j)-g(s_{j-1})}{s_j-s_{j-1}} \big((x-s_{j-1})_+-(x-s_j)_+\big) \\
	&=
	g(s_0)(x+1)_++\Big(\frac{g(s_1)-g(s_0)}{s_1-s_0}-g(s_0)\Big)(x)_+ \\
	&\quad +\sum_{j=1}^{2^{Kd}-1}\Big(
	\frac{g(s_{j+1})-g(s_j)}{s_{j+1}-s_j}-\frac{g(s_j)-g(s_{j-1})}{s_j-s_{j-1}}\Big)(x-s_j)_+.
\end{align*}
The function $\wt g(x)$ can therefore be represented on $[0,1]$ by a shallow ReLU network with $2^{Kd}+1$ units in the hidden layer. Moreover, $\wt g(s_j)=g(s_j)$ for all $j=0,\dots, 2^{Kd}.$ Finally, we bound the size of the network weights. We have $s_{j+1}-s_j \geq 3^{-Kd}.$ By Lemma \ref{lem.f_approx}, $\|f\|_\infty=\|g\|_{L^\infty(\mC)}.$ Since $0\leq s_j \leq 1$ and for any positive $a,$ $a(x-s_j)_+=\sqrt{a}(\sqrt{a}x-\sqrt{a} s_j)_+,$ we conclude that all network weights can be chosen to be smaller than $2\|f\|_\infty 2^{Kd}.$

{\it (D):} Figure \ref{fig.NWcomp} shows how the neural networks $\wt \phi_K$ and $\wt g$ can be combined into a deep ReLU network with architecture $(2K+3,(d,4d,\ldots,4d,d,1,2^{Kd}+1,1))$ and all network weights bounded in absolute value by $\max(2\|f\|_\infty 2^{Kd}, 2Kd 2^{K\beta p})$ computing the function $\wt f(x_1,\ldots,x_d):=\wt g( 3\sum_{q=1}^d 3^{-q} \wt \phi_K(x_q)).$ Since $3\sum_{q=1}^d 3^{-q} \phi_K(x_q) \in \{s_0,\dots,s_{2^{Kd}}\},$ the interpolation property $\wt g(s_j)=g(s_j)$ implies that $\wt g( 3\sum_{q=1}^d 3^{-q} \phi_K(x_q))=g( 3\sum_{q=1}^d 3^{-q} \phi_K(x_q)).$ Together with (B), we conclude that $$\wt f(x_1, \dots, x_d)=\wt g\Big( 3\sum_{q=1}^d 3^{-q} \wt \phi_K(x_q)\Big)= g\Big( 3\sum_{q=1}^d 3^{-q} \phi_K(x_q)\Big), \  \text{if} \ x_1, \dots, x_d \in \bigcap_{j=1}^K A_{j,r}.$$

As shown in Lemma \ref{lem.f_approx}, $\|f\|_\infty=\|g\|_{L^\infty(\mC)}.$ Since $\wt g$ is a piecewise linear interpolation of $g,$ we also have $\|\wt g\|_{L^\infty([0,1])}\leq \|f\|_\infty.$ As shown in (B), the Lebesgue measure of $(\cap_{j=1}^K A_{j,r})^c$ is bounded by $2^{-K \beta  p}/d.$ Decomposing the integral and using the approximation bound in Lemma \ref{lem.f_approx},
\begin{align*}
	\big\| f - \wt f \,\big\|_p^p 
	&\leq  
	\int_{\forall i:x_i \in \cap_{j=1}^K A_{j,r}} \Big| f(\bx) - g\Big( \sum_{q=1}^d 3^{-q} \phi_K(x_p)\Big)\Big|^p \, d\bx
	+ \int_{\exists i: x_i \notin \cap_{j=1}^K A_{j,r}} 2^p\|f\|_\infty^p \, d\bx \\
	&\leq 2^p Q^p 2^{-\beta Kp} +  2^p\|f\|_\infty^p 2^{-K \beta p} \\
	&\leq 2^p\big(Q + \|f\|_\infty\big)^p 2^{-K \beta p},
\end{align*}
using for the last inequality that $a^p+b^p \leq (a+b)^p$ for all $p\geq 1$ and all $a,b \geq 0.$
\end{proof}

Recall that for a function class with $m^d$ parameters, the expected optimal approximation rate for a $\beta$-smooth function in $d$ dimensions is $m^{-\beta}.$ The previous theorem leads to the rate $2^{-K\beta}$ using of the order of $2^{Kd}$ network parameters. This coincides thus with the expected rate. In contrast to several other constructions, no network sparsity is required to recover the rate. It is unclear whether the construction can be generalized to higher order smoothness or anisotropic smoothness.

The function approximation in Lemma \ref{lem.f_approx} is quite similar to tree-based methods in statistical learning. CART or MARS, for instance, select a partition of the input space by making successive splits along different directions and then fit a piecewise constant (or piecewise linear) function on the selected partition \cite{MR2722294}, Section 9.2. The KA approximation is also piecewise constant and the interior function assigns a unique value to each set in the dyadic partition. Enlarging $K$ refines the partition. The deep ReLU network constructed in the proof of Theorem \ref{thm.main} imitates the KA approximation and also relies on a dyadic partition of the input space. By changing the network parameters in the first layers, the unit cube $[0,1]^d$ can be split in more general subsets and similar function systems as the ones underlying MARS or CART can be generated using deep ReLU networks, see also  \cite{ECKLE2019, 2019arXiv190811140K}.

As typical for neural network constructions that decompose function approximation into a localization and a local approximation step, the deep ReLU network in Theorem \ref{thm.main} only depends on the represented function $f$ via the weights in the last hidden layer. As a consequence, one could use this deep ReLU network construction to initialize stochastic gradient descent. For that, it is natural to sample the weights in the output layer from a given distribution and assign all other network parameters to the corresponding value in the network construction. A comparison with standard network initializations will be addressed in future work. 

The fact that in the proposed network construction only the output layer depends on the represented function matches also with the observation that in deep learning a considerable amount of information about the represented function is decoded in the last layer. This is exploited in pre-training where a trained deep network from a different classification problem is taken and only the output layer is learned by the new dataset, see for instance \cite{Zeiler2014}. The fact that pre-training works shows that deep networks build rather generic function systems in the first layers. For real datasets, the learned parameters in the first hidden layers still exhibit some dependence on the underlying problem and transfer learning updating all weights based on the new data outperforms pre-training \cite{He_2019_ICCV}.


\section{Related literature}
\label{sec.rel_lit}

The section is intended to provide a brief overview of related approaches. 

\cite{Shen_2020} proposes a similar deep ReLU network construction without making a link to the KA representation or space-filling curves. The similarity between both approaches can be best seen in their Figure 5 or the outline of the proof for Theorem 2.1 in Section 3.2. Indeed, in a first step, the input space is partitioned into smaller hypercubes that are enumerated. The first hidden layers map the input to the index of the hypercube. This localization step is closely related to the action of the interior function used here. The last hidden layers of the deep ReLU network perform a piecewise linear approximation and this is essentially the same as the implementation of the outer function in the modified KA representation in this paper. To ensure good smoothness properties, \cite{Shen_2020} also includes gaps in the indexing, that fulfill a similar role as the gaps in the Cantor set here. \cite{2020arXiv200103040L} combines the approach with local Taylor expansions and achieves optimal approximation rates for functions that are smoother than Lipschitz.

Another direction is to search for activation function with good representation property based on the modified  KA representation in 
\cite{MAIOROV199981}. Their Theorem 4 states that one can find a real analytic, strictly increasing, and sigmoidal activation function $\sigma,$ such that for any continuous function $f:[0,1]^d \to \R$ and any $\eps>0,$ there exist parameters $\bw_{pq} \in \R^d, a_{pq}, b_{pq}, c_q, d_q \in \R,$ satisfying
\begin{align*}
	\sup_{\bx \in [0,1]^d} \bigg |f(\bx) -  \sum_{q=1}^{6d+3} d_q \, \sigma\Big(\sum_{p=1}^{3d} b_{pq}\sigma(\bw_{pq}^\top \bx	+a_{pq})+c_q\Big) \bigg| < \eps.
\end{align*}
This removes the dependence of the outer activation function in the KA representation on the represented function $f.$ The main issue is that despite its smoothness properties, the activation function $\sigma$ is not computable and it is unclear how to transfer the result to popular activation functions such as the ReLU. One step in this direction has been done in the recent work \cite{GULIYEV2018262} proving that one can design computable activation functions with complexity increasing as $\eps \downarrow 0.$ \cite{2020arXiv200612231S} shows that for a neural network with three hidden layers and three explicit and relatively simple but non-differentiable activation functions one can achieve extremely fast approximation rates.

The fact that deep networks can do bit encoding and decoding efficiently has been used previously in \cite{bartlettEtAL2019} to prove (nearly) sharp bounds for the VC dimension of deep ReLU networks and also in \cite{Yarotsky2018, 2019arXiv190609477Y} for a different construction to obtain approximation rates of very deep networks with fixed with. There are, however, several distinctive differences. These works employ bit encoding to compress several function values in one number (see for instance Section 5.2.1 in \cite{Yarotsky2018}), while we apply bit extraction to the input vector. In our approach the bit extraction leads to a localization of the input space and the function values only enters in the last hidden layer. It can be checked that in our construction the weight assignment is continuous, that is, small changes in the represented function will lead to small changes in the network weights. On the contrary, bit encoding in the function values results in discontinuous weight assignment and it is known that this is unavoidable for efficient function approximation based on very deep ReLU networks \cite{Yarotsky2018}.

\bibliographystyle{acm}       
\bibliography{bibDL}           

\end{document}